\newcommand{\X}{{\mathcal X}}
\newcommand{\Y}{{\mathcal Y}}
\newcommand{\R}{{\mathbb R}}
\newcommand{\be}{\begin{eqnarray}}
\newcommand{\ben}{\begin{eqnarray*}}
\newcommand{\en}{\end{eqnarray}}
\newcommand{\enn}{\end{eqnarray*}}
\newcommand{\half}{\frac{1}{2}}
\newtheorem{theorem}{Theorem}
\newenvironment{proof}{{\em Proof.}}{$\Box$}
\title{Learning with Smooth Hinge Losses}
\author{Junru~Luo 
\thanks{J. Luo is with the School of Computer Science and Artificial Intelligence $\&$ Aliyun School of Big Data,
Changzhou University, Changzhou, Jiangsu province, China e-mail: (luojunru@cczu.edu.cn). } 
        , Hong~Qiao
\thanks{H. Qiao is the Institute of Automation, Chinese Academy of Sciences, Beijing 100190, China
and School of Artificial Intelligence, University of Chinese Academy of Sciences, Beijing 100049, China
e-mail: (hong.qiao@ia.ac.cn).}
        , and~Bo~Zhang
\thanks{B. Zhang is the Academy of Mathematics and Systems Science, Chinese Academy of Sciences, Beijing 100190, China
and School of Mathematical Sciences, University of Chinese Academy of Sciences, Beijing 100049, China
e-mail: (b.zhang@amt.ac.cn).}}
\begin{document}
\bibliographystyle{plain}
\maketitle

\begin{abstract}
Due to the non-smoothness of the Hinge loss in SVM, it is difficult to obtain a faster convergence rate with
modern optimization algorithms.
In this paper, we introduce two smooth Hinge losses $\psi_G(\alpha;\sigma)$ and $\psi_M(\alpha;\sigma)$
which are infinitely differentiable and converge to the Hinge loss uniformly in $\alpha$ as $\sigma$ tends to $0$.
By replacing the Hinge loss with these two smooth Hinge losses, we obtain two smooth support vector machines
(SSVMs), respectively. Solving the SSVMs with the Trust Region Newton method (TRON) leads to two quadratically
convergent algorithms. Experiments in text classification tasks show that the proposed SSVMs are effective in
real-world applications.
We also introduce a general smooth convex loss function to unify several commonly-used convex loss functions
in machine learning. The general framework provides smooth approximation functions to non-smooth convex
loss functions, which can be used to obtain smooth models that can be solved with faster convergent
optimization algorithms.
\end{abstract}




\section{Introduction}
\label{sec1}

Consider binary classification problems.
Suppose we have a training dataset $\{(x_1,y_1),(x_2,y_2),\cdots,(x_n,y_n)\}$, which is assumed to be independent
and identically distributed realizations of a random pair $\X\times\Y$, where $\X\subset\R^p$ and $\Y =\{+1,-1\}$.
Our purpose is to learn a linear classifier $w\in\R^p$ to predict a new instance $x\in\X$ correctly.
The instance $x$ will be assigned to be positive if $w^Tx>0$, and negative otherwise.
Moreover, we use the $0/1$ loss to evaluate the performance of the classifier $w$, that is,
if the classifier $w$ makes a correct decision, then there is no loss and, otherwise, the loss is $1$.
To avoid overfitting, it is necessary to apply a regularization term to penalize the classifier.
$L_2$-regularization is the mostly used one in machine learning problems, which
allows for the use of a kernel function as a way of embedding the original data in a higher
dimension space \cite{Blanco2020,Shalev2011}.
In certain practical applications such as text classification, one hopes to learn a sparse classifier
through $L_1$-regularization \citep{Hong2019,Chen2019}.
By minimizing the structural risk, the binary classification problem is equivalent to the optimization problem:
\begin{align*}\label{binary-model}
\arg\min_{w\in\R^p}\half\lambda\|w\|^2 + \frac1n\sum_{i=1}^n\mathbb{I}(y_iw^Tx_i\le 0),
\end{align*}
where $\mathbb I(\cdot)$ returns $1$ if its argument is true and $0$ otherwise, and $\half\lambda\|w\|^2 $ is
the $L_2$-regularization term.
Due to the non-differentiability and non-convexity of the $0/1$ loss, it is an NP-hard problem to
optimize \eqref{binary-model} directly. To overcome this difficulty, it is common to replace the $0/1$ loss
with a convex surrogate loss function.
And many efficient convex optimization methods can thus be applied to obtain a good solution,
such as the gradient-based method and the coordinate descent method \cite{Shalev2011,Zheng2013,Chang2008,Zhang2004},
which are iteration methods based on taking the gradient or coordinate as a descent direction
to decrease the objective function.

What kind of convex functions can be applied to replace the $0/1$ loss has been studied
\cite{Lugosi2004,Steinwart2005,Zhang2004b,Bartlett2006}.
The weakest possible condition on the surrogate loss $\ell$ is that
it is {\em classification-calibrated}, which is a pointwise form of the Fisher consistency
for binary classification \cite{Lin2004}.
\cite{Bartlett2006} obtained a necessary and sufficient condition for a convex surrogate loss $\ell$ to be
classification-calibrated, as stated in the following theorem.

\begin{theorem}\label{Bartlett}[\cite{Bartlett2006}, Theorem 2]
A convex function $\ell$ is classification-calibrated if and only if it is differentiable at $0$ and
$\ell'(0)<0.$
\end{theorem}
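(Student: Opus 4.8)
The plan is to work directly from the pointwise characterization of classification-calibration in terms of the conditional $\ell$-risk. For $\eta\in[0,1]$ write $C_\eta(\alpha)=\eta\ell(\alpha)+(1-\eta)\ell(-\alpha)$, and set $H(\eta)=\inf_{\alpha\in\R}C_\eta(\alpha)$ together with the constrained infimum $H^-(\eta)=\inf_{\alpha(2\eta-1)\le 0}C_\eta(\alpha)$; classification-calibration means $H^-(\eta)>H(\eta)$ for every $\eta\neq\half$. Because $\ell$ is convex, so is $C_\eta$, and the whole argument reduces to comparing one-sided derivatives of $C_\eta$ at the origin, where the kink (if any) of $\ell$ interacts with the Bayes-optimal sign $\operatorname{sign}(2\eta-1)$.

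First I would compute those one-sided derivatives. Using $\tfrac{d}{d\alpha}\ell(-\alpha)=-\ell'(-\alpha)$ and the fact that a convex $\ell$ has left and right derivatives everywhere with $\ell'_-(0)\le\ell'_+(0)$, a short computation gives $(C_\eta)'_+(0)=\eta\,\ell'_+(0)-(1-\eta)\,\ell'_-(0)$ and $(C_\eta)'_-(0)=\eta\,\ell'_-(0)-(1-\eta)\,\ell'_+(0)$. The key reduction is then: for $\eta>\half$ (constraint region $\alpha\le 0$), convexity of $C_\eta$ places its minimizer set strictly in $(0,\infty)$ \emph{exactly} when $(C_\eta)'_+(0)<0$, and in that case $(C_\eta)'_-(0)\le(C_\eta)'_+(0)<0$ makes $C_\eta$ strictly decreasing on $(-\infty,0]$, so $H^-(\eta)=C_\eta(0)=\ell(0)>H(\eta)$; conversely if $(C_\eta)'_+(0)\ge 0$ then some minimizer of $C_\eta$ lies in $(-\infty,0]$ and $H^-(\eta)=H(\eta)$. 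Hence, for $\eta>\half$, calibration at $\eta$ is equivalent to $(C_\eta)'_+(0)<0$, and symmetrically for $\eta<\half$ it is equivalent to $(C_\eta)'_-(0)>0$.

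With this reduction both directions follow by reading the inequalities as functions of $\eta$. For necessity, letting $\eta\downarrow\half$ in $\eta\,\ell'_+(0)-(1-\eta)\,\ell'_-(0)<0$ yields $\ell'_+(0)\le\ell'_-(0)$; combined with the convexity inequality $\ell'_-(0)\le\ell'_+(0)$ this forces $\ell'_+(0)=\ell'_-(0)$, i.e.\ differentiability at $0$, and then the strict inequality at any fixed $\eta>\half$ reads $(2\eta-1)\ell'(0)<0$, giving $\ell'(0)<0$. For sufficiency, if $\ell$ is differentiable at $0$ with $\ell'(0)<0$, then $(C_\eta)'_+(0)=(2\eta-1)\ell'(0)<0$ for all $\eta>\half$ and $(C_\eta)'_-(0)=(2\eta-1)\ell'(0)>0$ for all $\eta<\half$, so the reduction delivers $H^-(\eta)>H(\eta)$ throughout $\eta\neq\half$.

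I expect the main obstacle to be making the reduction of the second paragraph fully rigorous, not the algebra. One must handle the possibility that $H(\eta)$ is not attained, or that the minimizer set of the convex $C_\eta$ is a nondegenerate or unbounded interval, and argue carefully that a negative right-derivative at $0$ simultaneously pushes \emph{all} minimizers into $(0,\infty)$ and renders $C_\eta$ monotone on $(-\infty,0]$, so that the constrained infimum equals precisely $\ell(0)$. The limiting step $\eta\downarrow\half$, which collapses a family of strict inequalities into the single non-strict derivative-matching condition that produces differentiability, is the conceptual crux and should be justified by the continuity of $\eta\mapsto(C_\eta)'_+(0)$ in $\eta$.
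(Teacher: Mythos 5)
The paper does not prove this statement at all: it is quoted verbatim as Theorem 2 of the cited reference \cite{Bartlett2006} (Bartlett, Jordan and McAuliffe), so there is no internal proof to compare yours against. Judged on its own merits, your proposal is correct, and it is essentially the standard argument from that reference: reduce classification calibration, via the conditional risk $C_\eta(\alpha)=\eta\ell(\alpha)+(1-\eta)\ell(-\alpha)$, to a sign condition on the one-sided derivatives of $C_\eta$ at $0$, then read off differentiability and $\ell'(0)<0$ by letting $\eta\downarrow\half$ and invoking the convexity inequality $\ell'_-(0)\le\ell'_+(0)$. Your one-sided derivative formulas $(C_\eta)'_+(0)=\eta\ell'_+(0)-(1-\eta)\ell'_-(0)$ and $(C_\eta)'_-(0)=\eta\ell'_-(0)-(1-\eta)\ell'_+(0)$ are right, and both directions of the equivalence are handled correctly.

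One remark: the obstacle you flag in your final paragraph (non-attained infima, degenerate or unbounded minimizer sets) is not actually an obstacle, because the reduction never needs minimizers, only infima. If $(C_\eta)'_+(0)<0$, then monotonicity of one-sided derivatives of a convex function makes $C_\eta$ strictly decreasing on $(-\infty,0]$, so $H^-(\eta)=C_\eta(0)=\ell(0)$, while the definition of the right derivative supplies some $\epsilon>0$ with $C_\eta(\epsilon)<C_\eta(0)$, whence $H(\eta)<H^-(\eta)$; conversely, if $(C_\eta)'_+(0)\ge 0$, convexity gives $C_\eta(\alpha)\ge C_\eta(0)$ for all $\alpha\ge 0$, so $\inf_{\alpha\ge 0}C_\eta(\alpha)\ge C_\eta(0)\ge H^-(\eta)$ and hence $H(\eta)=H^-(\eta)$. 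Everything rests on two elementary convexity facts, with no attainment argument required, and the limiting step $\eta\downarrow\half$ is immediate because $(C_\eta)'_+(0)$ is affine (hence continuous) in $\eta$. So your plan closes up into a complete proof with less work than you anticipate.
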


A brief overview of surrogate loss functions frequently used in practice is given in \cite{Lin2004,Teo2007}.
The mostly used surrogate loss functions include the Hinge loss for SVM, the logistic loss for logistical regression,
and the exponential loss for AdaBoost. They are all classification-calibrated.
Note that the logistic and exponential losses are smooth, but the Hinge loss is not.
As a result, solving SVMs with gradient-based methods only gives a suboptimal convergence rate,
and no second-order algorithm is available for solving SVMs. To address this issue,
the squared Hinge loss was introduced, but this leads to a new model since the squared Hinge loss is not
an approximation of the Hinge loss \cite{Chang2008,Lee2001}.

In this paper, we propose two new convex surrogate losses $\psi_G(\alpha;\sigma)$ and $\psi_{M}(\alpha;\sigma)$
for binary classification, where $\sigma$ is a tunable hyper-parameter (see \eqref{SmoothHL} below),
which are called smooth Hinge losses due to the following reasons.
First, $\psi_G(\alpha;\sigma)$ and $\psi_{M}(\alpha;\sigma)$ converge to the Hinge loss uniformly in $\alpha$
as $\sigma$ approaches to $0$, so they can keep the advantage of the Hinge loss in SVMs.
Secondly, $\psi_G(\alpha;\sigma)$ and $\psi_{M}(\alpha;\sigma)$ are infinitely differentiable.
By replacing the Hinge loss with these two smooth Hinge losses, we obtain two smooth support vector machines (SSVMs)
which can be solved with second-order methods.
In particular, they can be solved by the inexact Newton method with a quadratic convergence rate
as conducted in \cite{Agarwal2017,Lin2008} for the logistic regression.
Although first-order methods are often sufficient in machine learning,
there will be a great improvement in training time experimentally on the large scale sparse learning problems by using second-order methods.

Motivated by the proposed smooth Hinge losses, we also propose a general smooth convex loss function
$\psi(\alpha)=\Phi_c(v)(\theta-\alpha)+\phi_c(v)\sigma$ with $v=(\theta-\alpha)/\sigma$,
where $\Phi_c(v)$ and $\phi_c(v)$ satisfy the conditions given in Theorem \ref{GCL} below.
This general smooth convex loss function $\psi(\alpha)$ provides a smooth approximation to several surrogate loss
functions usually used in machine learning, such as the non-differentiable absolute loss which is usually
used as a regularization term, and the rectified linear unit (ReLU) activation function used in deep neural networks.

This paper is organized as follows. In Section \ref{sec2}, we first briefly review several SVMs with different convex
loss functions and then introduce the smooth Hinge loss functions $\psi_G(\alpha;\sigma),\psi_{M}(\alpha;\sigma)$.
The general smooth convex loss function $\psi(\alpha)$ is then presented and discussed in Section \ref{sec3}.
In Section \ref{sec4}, we give the smooth support vector machine by replacing the Hinge loss with the smooth Hinge
loss $\psi_G$ or $\psi_M$.
The first-order and second-order algorithms for the proposed SSVMs are also presented and analyzed.
Several empirical examples of text categorization with high dimensions and sparse features are implemented
in Section \ref{sec5}; the results show that the smooth Hinge losses are efficient for binary classification.
Some conclusions are given in Section \ref{sec6}.

\section{Smooth Hinge Losses}\label{sec2}

The support vector machine (SVM) is a famous algorithm for binary classification and has now also been applied
to many other machine learning problems such as the AUC learning, multi-task learning, multi-class classification
and imbalanced classification problems \cite{Rakotomamonjy2004,Lima2018,Mei2019,Iranmehr2019}.
\cite{Cervantes2020} is a recent survey work about the applications, challenges and trends of SVM.

The SVM model can be described as the following optimization problem
\be\label{SVM}
\arg\min_{w\in\R^d}\frac12\lambda\|w\|^2 +\frac1n\sum_{i=1}^n \ell(y_iw^Tx_i),
\en
where the used surrogate loss is the Hinge loss$\ell(\alpha)=\max\{0,1-\alpha\}$.
The model \eqref{SVM} is called L1-SVM.

Since the Hinge loss is not smooth, it is usually replaced with a smooth function.
One is the squared Hinge loss $\ell(\alpha)=\max\{0,1-\alpha\}^2$,
which is convex, piecewise quadratic and differentiable \cite{Chang2008,Wangg2020}.
The SVM model with the squared Hinge loss is called L2-SVM.
\cite{Lee2001} proposed to replace the squared Hinge loss by its smooth approximation
$\ell(\alpha)=(1-\alpha)+\sigma\ln(1+e^{-\frac{1-\alpha}{\sigma}})$.
In order to make the objective of the Lagrangian dual problem of L1-SVM strongly convex,
which is needed in developing an accurate optima estimation with dual methods,
the following smoothed hinge loss is proposed in \cite{Shalev-Shwartz2013} to replace the hinge loss:
\ben
\ell_\gamma(\alpha) = \begin{cases}
0  &\text{ if } \alpha \ge 1 \\
1-\alpha-\frac{\gamma}{2} &\text{ if } \alpha \le 1- \gamma \\
\frac{1}{2\gamma}(1-\alpha)^2  &\text{ otherwise.}
\end{cases}
\enn
The stochastic dual coordinate ascent method is then applied to accelerate the training
process \citep{Shalev-Shwartz2013,Shalev2016}.
With the help of $L_1$-regularization and the smoothed hinge-loss $\ell_\gamma(\cdot)$,
a sparse and smooth support vector machine is obtained in \cite{Hong2019}.
By simultaneously identifying the inactive features and samples, a novel screening method
was further developed in \citep{Hong2019}, which is able to reduce a large-scale problem to a small-scale problem.

Motivated by the smoothing technique in quantile regression, \cite{Wang2019} presents the smooth approximation $K_h(\alpha)=(1-\alpha)H((1-\alpha)/h)$ to the hinge loss, where $h$ is a bandwidth and $H(\cdot)$ is the smooth
function defined by
\ben
H(\alpha) = \begin{cases}
0  &\text{ if } \alpha \le -1 \\
\frac12 + \frac{15}{16}(\alpha-\frac23 \alpha^3 + \frac15 \alpha^5)&\text{ if } -1 < \alpha < 1 \\
1  &\text{ otherwise.}
\end{cases}
\enn
By replacing the hinge loss with its smooth approximation $K_h(\alpha)$, a smooth SVM is obtained,
and a linear-type estimator is constructed for the smooth SVM in a distributed setting in \cite{Wang2019}.

Although there have already been several smooth loss functions to replace the Hinge loss in practice,
they may not be ideal choices due to the fact that they are either not approximations to the Hinge loss or at most
twice differentiable, such as the squared Hinge loss and its approximations as well as $\ell_\gamma$ and $K_h$.
In this section, we propose two (infinitely differentiable) smooth Hinge loss functions which  
overcome the above weakness and are given by
\begin{align}\label{SmoothHL}
\begin{split}
&\psi_G(\alpha;\sigma)=\Phi(v)(1-\alpha)+\phi(v)\sigma,\\
&\psi_{M}(\alpha;\sigma)=\Phi_{M}(v)(1-\alpha)+\phi_{M}(v)\sigma,
\end{split}
\end{align}
where $\sigma > 0$ is a given parameter and $v=({1-\alpha})/{\sigma}$.
Here, $\Phi(\cdot)$ and $\phi(\cdot)$ are the cumulative distribution function (CDF) and probability
density function (PDF) of the standard normal distribution, respectively,
$\Phi_{M}(v)=(1+{v}/{\sqrt{1+v^2}})/2,$ $\phi_{M}(v)={1}/(2{\sqrt{1+v^2}})$.
The following theorem gives the approximation property of $\psi_G$ and $\psi_{M}$.

\begin{theorem}\label{UniSmoothHL}
$\psi_G(\alpha;\sigma)$ and $\psi_{M}(\alpha;\sigma)$ satisfy the estimates:\\
1) $0\le\psi_G(\alpha;\sigma)-\max\{0,1-\alpha\}\le{\sigma}/{\sqrt{2\pi}}$;\\
2) $0\le\psi_{M}(\alpha;\sigma)-\max\{0,1-\alpha\}\le{\sigma}/{2}$.\\
Thus $\psi_G(\alpha;\sigma)$ and $\psi_{M}(\alpha;\sigma)$ converge to the Hinge loss uniformly in $\alpha$
as $\sigma$ tends to $0.$
\end{theorem}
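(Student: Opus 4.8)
The plan is to reduce both estimates to a single one-dimensional analysis by exploiting the scaling built into the substitution $v=(1-\alpha)/\sigma$. First I would observe that since $1-\alpha=\sigma v$ and $\sigma>0$, we have $\max\{0,1-\alpha\}=\sigma\max\{0,v\}$, so that
\[
\psi_G(\alpha;\sigma)-\max\{0,1-\alpha\}=\sigma\,g(v),\qquad g(v):=v\Phi(v)+\phi(v)-\max\{0,v\}.
\]
The entire $\sigma$-dependence thus factors out, and it suffices to prove $0\le g(v)\le 1/\sqrt{2\pi}$ uniformly in $v\in\R$; multiplying by $\sigma$ then yields estimate 1).

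For the upper bound I would study the monotonicity of $g$. Using $\Phi'=\phi$ and $\phi'(v)=-v\phi(v)$, a direct differentiation produces a clean cancellation: for $v>0$ (where $\max\{0,v\}=v$) one finds $g'(v)=\Phi(v)-1<0$, while for $v<0$ one finds $g'(v)=\Phi(v)>0$. Hence $g$ increases on $(-\infty,0)$, decreases on $(0,\infty)$, and attains its global maximum at $v=0$, namely $g(0)=\phi(0)=1/\sqrt{2\pi}$. (One checks directly that $g$ is even, which halves the work.) For the lower bound I would combine this monotonicity with the behavior at infinity: a Gaussian-tail (Mills-ratio) estimate shows $v\Phi(-v)\to 0$ as $v\to+\infty$, whence $g(v)\to 0$ at both ends; since $g$ is monotone on each half-line with limit $0$, it follows that $g\ge 0$ everywhere. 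This establishes $0\le g\le 1/\sqrt{2\pi}$.

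The argument for $\psi_M$ is identical in structure. Setting $h(v):=v\Phi_M(v)+\phi_M(v)-\max\{0,v\}$, I would verify the derivative identities $\Phi_M'(v)=1/\bigl(2(1+v^2)^{3/2}\bigr)$ and $\phi_M'(v)=-v/\bigl(2(1+v^2)^{3/2}\bigr)$, so that the same cancellation yields $h'(v)=\Phi_M(v)-1<0$ for $v>0$ and $h'(v)=\Phi_M(v)>0$ for $v<0$. Thus $h$ peaks at $v=0$ with $h(0)=\phi_M(0)=\half$, and since $\Phi_M(v)-1=\frac12\bigl(v/\sqrt{1+v^2}-1\bigr)=O(v^{-2})$ forces $h(v)\to 0$ as $v\to\pm\infty$, monotonicity again gives $h\ge 0$. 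Hence $0\le h\le\half$, and estimate 2) follows after multiplying by $\sigma$.

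Finally, since both bounds $\sigma/\sqrt{2\pi}$ and $\sigma/2$ are independent of $\alpha$, the uniform convergence to the Hinge loss as $\sigma\to 0$ is immediate. The only genuinely delicate step is the lower bound: the derivative cancellations are routine, but confirming $g(v),h(v)\to 0$ at infinity requires the tail asymptotics of $\Phi$ and of $\Phi_M$ (equivalently, the inequality $v\Phi(-v)\le\phi(v)$ for $v>0$), which is where I would be most careful.
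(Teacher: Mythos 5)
Your proof is correct and follows essentially the same route as the paper's: the same derivative cancellation $\Phi'(v)v+\phi'(v)=0$ (and its $\Phi_M,\phi_M$ analogue) gives piecewise monotonicity of the difference, which peaks at the kink ($\alpha=1$, i.e.\ $v=0$) and vanishes at both infinities; your change of variables to $v$ merely factors out $\sigma$ and is cosmetic. If anything you are slightly more careful than the paper, which asserts $\lim_{\alpha\to\infty}\psi_G(\alpha;\sigma)=0$ and $\lim_{\alpha\to-\infty}[\psi_G(\alpha;\sigma)-\ell(\alpha)]=0$ without noting that these limits rest on the Gaussian-tail (Mills-ratio) estimate you make explicit.
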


\begin{proof}
Let $\ell(\alpha)=\max\{0,1-\alpha\}$ and $v=({1-\alpha})/{\sigma}$.
Taking the derivative of $\psi_G(\alpha;\sigma)$, we have
\ben
\psi_G'(\alpha;\sigma) = -(\Phi'(v)v + \phi'(v))- \Phi(v).
\enn
By the definition of $\phi(v)$ and $\Phi(v)$ we have
\ben
\Phi'(v)=\phi(v)=\frac{1}{\sqrt{2 \pi}}e^{-\frac12v^2}.
\enn
It then follows that
\be\label{EquationCG}
\Phi'(v)v+\phi'(v)=0.
\en
Thus we have $\psi_G'(\alpha;\sigma)=-\Phi(v)\le 0$, which means that $\psi_G$ is a monotonically decreasing
function of $\alpha$. For $\alpha\ge1$, we have $\ell(\alpha)=0$, so
\ben
0=\lim_{\alpha\to\infty}\psi_G(\alpha;\sigma)\le\psi_G(\alpha;\sigma)-\ell(\alpha)
\le\psi_G(1;\sigma).
\enn
For $\alpha\le 1$, $\ell(\alpha)=1-\alpha$ and $(\psi_G(\alpha;\sigma)-\ell(\alpha))' =1-\Phi(v)\ge 0$,
implying that
\ben
0=\lim_{\alpha\to-\infty}[\psi_G(\alpha;\sigma)-\ell(\alpha)]\le\psi_G(\alpha;\sigma)-\ell(\alpha)
\le\psi_G(1;\sigma).
\enn
Thus,
\ben
0\le\psi_G(\alpha;\sigma)-\max\{0,1-\alpha\}\le\psi_G(1;\sigma)=\frac{\sigma}{\sqrt{2\pi}}.
\enn

For $\psi_{M}(\alpha;\sigma)$ we have
\ben
\psi_{M}'(\alpha;\sigma)=-(\Phi_M'(v)v +\phi_M'(v))-\Phi_M(v).
\enn
By the definition of $\Phi_M(v)$ and $\phi_M(v)$ it is easy to see that
\begin{align}\label{EquationCM}
\Phi_M'(v)v + \phi_M'(v)=0.
\end{align}
Thus, $\psi_M'(\alpha;\sigma)=-\Phi_M(v)\le 0$, implying that $\psi_M$ is monotonically decreasing.
Similarly as for $\psi_G$, we can easily prove that
\ben
0\le\psi_M(\alpha;\sigma)-\max\{0,1-\alpha\}\le\psi_M(1;\sigma)={\sigma}/{2}.
\enn
The proof is thus complete.
\end{proof}

\section{A General Smooth Convex Loss}\label{sec3}

Motivated by \eqref{EquationCG} and \eqref{EquationCM} we propose a general smooth convex loss function
as stated in the following theorem.

\begin{theorem}\label{GCL}
Given $\theta\in\R$ and $\sigma>0$, define $\psi(\alpha)=\Phi_c(v)(\theta-\alpha)+\phi_c(v)\sigma$,
where $v=(\theta-\alpha)/\sigma$, $\Phi_c(v)$ and $\phi_c(v)$ are differentiable and satisfy
that $\Phi_c'(v)v+\phi_c'(v)=0$ and $\Phi_c'(v)\ge 0$. Then we have\\
1) $\psi(\alpha)$ is twice differentiable with $\psi'(\alpha)=-\Phi_c(v)$ and $\psi''(\alpha)=\Phi_c'(v)/\sigma$;\\
2) $\psi(\alpha)$ is convex;\\
3) $\psi(\alpha)$ is $\gamma$-strongly convex if $\Phi_c'(v)\ge\gamma\sigma$ for all $v\in\R$;\\
4) $\psi(\alpha)$ is $\mu$-smooth convex if $\Phi_c'(v)\le\mu\sigma$ for all $v\in\R$;\\
5) $\psi(\alpha)$ is classification-calibrated for binary classification if $\Phi_c({\theta}/{\sigma})>0$;\\
6) the conjugate of $\psi(\alpha)$ is $\psi^{\star}(\beta)=\beta\theta-\phi_c(\Phi_c^{-1}(-\beta))\sigma,$
$\beta\in-R(\Phi_c)$, where $\Phi_c^{-1}$ is the inverse function of $\Phi_c$ and $R(\Phi_c)$ is the range of $\Phi_c$.
\end{theorem}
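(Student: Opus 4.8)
The plan is to reduce all six claims to a single derivative computation that exploits the structural constraint $\Phi_c'(v)v + \phi_c'(v) = 0$. First I would compute $\psi'(\alpha)$ by the chain rule: since $v = (\theta-\alpha)/\sigma$ we have $dv/d\alpha = -1/\sigma$ and $\theta - \alpha = v\sigma$. Differentiating $\psi(\alpha) = \Phi_c(v)(\theta-\alpha) + \phi_c(v)\sigma$ gives three terms, and after substituting $\theta - \alpha = v\sigma$ the terms carrying $\Phi_c'$ and $\phi_c'$ combine into $-(\Phi_c'(v)v + \phi_c'(v))$, which vanishes by hypothesis, leaving exactly $\psi'(\alpha) = -\Phi_c(v)$. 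Differentiating once more yields $\psi''(\alpha) = \Phi_c'(v)/\sigma$, establishing claim 1. This is the computational core, and it is the same cancellation already verified for $\psi_G$ and $\psi_M$ in the proof of Theorem \ref{UniSmoothHL} via \eqref{EquationCG} and \eqref{EquationCM}.

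Claims 2 through 5 then follow directly from these formulas. Since $\Phi_c'(v) \ge 0$ and $\sigma > 0$, we have $\psi''(\alpha) \ge 0$, giving convexity (claim 2). For a twice-differentiable function, $\gamma$-strong convexity is equivalent to $\psi'' \ge \gamma$ and $\mu$-smoothness to $\psi'' \le \mu$; substituting the bounds $\Phi_c'(v) \ge \gamma\sigma$ and $\Phi_c'(v) \le \mu\sigma$ and dividing by $\sigma$ gives claims 3 and 4. For claim 5 I would invoke Theorem \ref{Bartlett}, which says a convex function is classification-calibrated if and only if it is differentiable at $0$ with $\psi'(0) < 0$; differentiability holds by claim 1, and evaluating at $\alpha = 0$ gives $v = \theta/\sigma$, so $\psi'(0) = -\Phi_c(\theta/\sigma)$, whence the assumption $\Phi_c(\theta/\sigma) > 0$ is precisely $\psi'(0) < 0$.

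The remaining step, and the one I expect to demand the most care, is the conjugate formula of claim 6. Starting from $\psi^{\star}(\beta) = \sup_\alpha[\beta\alpha - \psi(\alpha)]$, I would set the derivative of the bracket to zero, obtaining $\beta = \psi'(\alpha) = -\Phi_c(v)$ and hence $v = \Phi_c^{-1}(-\beta)$; convexity of $\psi$ (claim 2) ensures this stationary point is the maximizer. Writing $\alpha = \theta - v\sigma$ and substituting back, the cross terms $-\beta v\sigma$ and $-\Phi_c(v)v\sigma = \beta v\sigma$ cancel, leaving $\psi^{\star}(\beta) = \beta\theta - \phi_c(\Phi_c^{-1}(-\beta))\sigma$. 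The subtle points are justifying that $\Phi_c$ is invertible on the relevant set so that $\Phi_c^{-1}(-\beta)$ is well defined, and identifying the effective domain: the stationarity equation $\Phi_c(v) = -\beta$ is solvable exactly when $-\beta \in R(\Phi_c)$, which is why the formula is stated for $\beta \in -R(\Phi_c)$.
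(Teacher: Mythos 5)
Your proposal is correct and follows essentially the same route as the paper's proof: the chain-rule computation with the cancellation $\Phi_c'(v)v+\phi_c'(v)=0$ for claim 1, second-derivative criteria for claims 2--4, Theorem \ref{Bartlett} with $\psi'(0)=-\Phi_c(\theta/\sigma)$ for claim 5, and the first-order condition $\Phi_c(v)=-\beta$ for the conjugate in claim 6. The only cosmetic difference is that the paper changes variables to $v$ inside the supremum before minimizing, whereas you impose stationarity in $\alpha$ directly; both reduce to the same cancellation, and your added attention to the invertibility of $\Phi_c$ and the domain $\beta\in-R(\Phi_c)$ is if anything slightly more careful than the paper.
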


\begin{proof}
1) It is easy to obtain that
\ben
\psi'(\alpha)&=&-(\Phi_c'(v)v+\phi_c'(v))-\Phi_c(v)=-\Phi_c(v),\\
\psi''(\alpha)&=&\Phi_c'(v)/\sigma.
\enn
2) Since $\psi''(\alpha)=\Phi_c'(v)/\sigma\ge 0$, $\psi$ is convex.\\
3) If $\Phi_c'(v)\ge\gamma\sigma$ for all $v\in\R$, then $\psi''(\alpha)\ge\gamma$. Thus
$\psi(\alpha)$ is $\gamma$-strongly convex, that is,
\ben
\psi(\alpha)-\psi(\beta)\ge\psi'(\beta)(\alpha-\beta)+\frac{\gamma}{2}|\alpha-\beta|^2,\;\;
\forall\alpha,\beta\in\R.
\enn
4) If $\Phi_c'(v)\le\mu\sigma$, then $\psi''(\alpha)\le\mu$, so the convex function $\psi(\alpha)$
is $\mu$-smooth, that is,
\ben
|\psi'(\alpha)-\psi'(\beta)|\le\mu|\alpha-\beta|,\;\;\forall\alpha,\beta\in\R.
\enn
5) Since $\psi'(0)=-\Phi_c({\theta}/{\sigma})<0$, by Theorem \ref{Bartlett} the convex function $\psi(\alpha)$
is classification-calibrated.\\
6) The conjugate function of $\psi$ is
\ben
\psi^{\star}(\beta) &=&\sup_{\alpha\in\R}(\beta\alpha-\psi(\alpha)) \\
&=& \sup_{\alpha\in\R}(\beta\alpha-\Phi_c(v)(\theta-\alpha) - \phi_c(v)\sigma) \\
&=& \beta \theta - \inf_{v\in\R}( \beta v + \Phi_c(v)v + \phi_c(v)) \sigma.
\enn
Let $L(v)=\beta v +\Phi_c(v)v+\phi_c(v)$ and $v^{\star}=\arg\inf_{v\in\R}L(v)$.
Then $L'(v)=\beta +\Phi_c(v)$.
Thus, $L(v)$ reaches its minimum at $v^{\star}=\Phi_c^{-1}(-\beta)$, and so we have
\ben
\psi^{\star}(\beta)&=&\beta\theta-(\beta v^{\star}+\Phi_c(v^{\star})v^{\star}+\phi_c(v^{\star})) \\
&=&\beta\theta-\phi_c(v^{\star})=\beta\theta-\phi_c(\Phi_c^{-1}(-\beta)).
\enn
\end{proof}

Note that if $\Phi_c$ and $\phi_c$ satisfy the conditions in Theorem \ref{GCL} and $\Phi_c(v) \ge 0$,
suppose $v=v(\alpha)$ is convex and differentiable, then $\psi_v(\alpha)=\Phi_c(v)v+\phi_c(v)$ is also convex.
The general smooth convex loss function $\psi(\alpha)$ includes many surrogate loss functions mostly used
in binary classification as special cases, as shown below.
Figure \ref{figbinary} below presents several surrogate convex loss functions.

{\bf Example 1: Least Square Loss.} Let $\Phi_c(v)=\sigma v$ and $\phi_c(v)=-\sigma v^2/2$.
Let $\theta>0$. Then
\ben
\psi(\alpha)=\Phi_c(v)(\theta-\alpha)+\phi_c(v)\sigma =(\theta-\alpha)^2/2,
\enn
which is the least square loss. The parameter $\theta$ satisfies $\Phi_c({\theta}/{\sigma})=\theta>0$.
The conditions in Theorem \ref{GCL} are easy to verify, and it is easy to obtain that
\ben
\psi'(\alpha)&=&-\Phi_c(v)=\alpha-\theta,\;\;\psi''(\alpha)={\Phi_c'(v)}/{\sigma}=1,\\
\psi^{\star}(\beta)&=&\beta\theta-\phi_c(\Phi_c^{-1}(-\beta))
=\beta\theta +\sigma(-{\beta}/{\sigma})^2/2 \\
&=& \beta\theta+{\beta^2}/({2\sigma}),\;\; \beta\in\R.
\enn

{\bf Example 2: Smooth Hinge Loss $\psi_G$.} Let $\Phi_c(v)=\Phi(v)$ and $\phi_c(v)=\phi(v)$
with $\Phi$ and $\phi$ given in Section \ref{sec2}. Then
\ben
\psi(\alpha)=\Phi(v)(\theta-\alpha)+\phi(v)\sigma.
\enn
By \eqref{EquationCG}, $\Phi_c'(v)v+\phi_c'(v)=0$, so $\psi'(\alpha)=-\Phi(v)\le0$.
Then $\psi(\alpha)\ge\lim_{\alpha\to\infty}\psi(\alpha)=0$. Setting $\theta=1$ gives the smooth Hinge
loss $\psi_{G}$. Further, $\Phi_c'(v)=\phi(v)\ge 0$, $\forall v\in\R$,
$\Phi_c({\theta}/{\sigma})=\Phi({\theta}/{\sigma})>0$, $\forall\theta\in\R$, and
$\psi(\alpha)$ tends to the Hinge loss as $\sigma\to0$.
It is also easy to see that
\ben
\psi''(\alpha)&=&{\Phi_c'(v)}/{\sigma}={\phi(v)}/{\sigma}, \\
\psi^{\star}(\beta)&=&\beta\theta-\phi_c(\Phi_c^{-1}(-\beta))=\beta\theta-\phi(\Phi^{-1}(-\beta))
\enn
with $\beta\in (-1,0)$.

{\bf Example 3: Smooth Hinge Loss $\psi_{M}$.} Let $\Phi_c(v) = \Phi_{M}(v)$ and
$\phi_c(v) = \phi_{M}(v)$. Then it follows that
\ben
\psi(\alpha)
&=& \Phi_c(v)(\theta-\alpha)+\phi_c(v)\sigma \\
&=& \frac12 (\theta-\alpha) + \frac12 \sqrt{(\theta-\alpha)^2 + \sigma^2}.
\enn
By \eqref{EquationCM}, $\Phi_c'(v)v+\phi_c'(v)=0$, so $\psi'(\alpha)=-\Phi_M(v)\le0$.
Then $\psi(\alpha)\ge\lim_{\alpha\to\infty}\psi(\alpha)=0$.
Moreover, $\Phi_c(\frac{\theta}{\sigma})\ge0$, $\forall\theta\in\R$,
$\Phi_M'(v)=(1+v^2)^{-3/2}/2>0$, $\forall v\in\R$, and $\psi(\alpha)$ tends to the Hinge loss as $\sigma\to0$.
Setting $\theta=1$ gives the smooth Hinge loss $\psi_{M}$.
It is easy to get
\ben
\psi''(\alpha)={\Phi_M'(v)}/{\sigma}=(1+v^2)^{-3/2}/(2\sigma).
\enn
In addition, we have
\ben
\psi^{\star}(\beta)=\beta\theta-\phi_c(\Phi_c^{-1}(-\beta))=\beta\theta-\frac12\sqrt{1-(2\beta+1)^2}
\enn
with $\beta\in (-1,0)$.

{\bf Example 4: Exponential Loss.} Let $\Phi_c(v)=e^{v}$, $\phi_c(v)=(1-v)e^v$.
Then $\psi(\alpha)=\Phi_c(v)(\theta-\alpha)+\phi_c(v)\sigma=\sigma e^v\ge0.$
Further, $\Phi_c'(v)=e^v\ge0$ and $\Phi_c({\theta}/{\sigma})=e^{\theta/\sigma}>0$
for all $\theta\in\R$. Moreover, $\Phi_c'(v)v+\phi_c'(v)=e^vv +(1-v)e^v-e^v=0$.
Letting $\theta=0$ and $\sigma =1$ gives the exponential loss. It is easy to get that
\ben
\psi'(\alpha)=-\Phi_c(v)=-e^{v},\;\;\psi''(\alpha)={\Phi_c'(v)}/{\sigma}={e^v}/{\sigma},
\enn
and
\ben
\psi^{\star}(\beta)&=&\beta\theta-\phi_c(\Phi_c^{-1}(-\beta))=\beta\theta+\beta(1-\ln(-\beta))\\
&=&\beta(1+\theta-\ln(-\beta)),\;\beta \in (-\infty,0).
\enn

{\bf Example 5: Logistic Loss.} Let $\Phi_c(v)={e^v}/({1+e^v})$, $\phi_c(v)=\ln(1+e^v)-v{e^v}/({1+e^v})$.
Then
\ben
\psi(\alpha)=\Phi_c(v)(\theta-\alpha)+\phi_c(v)\sigma=\sigma\ln(1+e^v)\ge0.
\enn
It is easy to get that $\Phi_c'(v)=e^v(1+e^v)^{-2}$, $\Phi_c({\theta}/{\sigma})>0$
for all $\theta\in\R$, and $\Phi_c'(v)v+\phi_c'(v)=0$.
Letting $\theta=0$ and $\sigma =1$ gives the logistic loss.
Further, we have
\ben
\psi'(\alpha)=-\frac{e^v}{1+e^v},\;\;\psi''(\alpha)=\frac{\Phi_c'(v)}{\sigma}=\frac{1}{\sigma}e^v(1+e^v)^{-2}.
\enn
For $\beta\in(-1,0)$, $\Phi_c^{-1}(-\beta)=\ln(-{\beta}/({1+\beta}))$, so
\ben
\psi^{\star}(\beta)&=&\beta\theta-\phi_c(\Phi_c^{-1}(-\beta)) \\
&=&\beta(\theta-\ln(-\beta))+(1+\beta)\ln(1+\beta).
\enn

{\bf Example 6: Smooth Absolute Loss.} Let $\Phi_c(v)=\arctan(v)$, $\phi_c(v)=-\frac12\ln(1+v^2)$.
Then it follows that
\ben
\psi(\alpha)=\arctan(v)(\theta-\alpha)-\frac{\sigma}2\ln(1+v^2),
\enn
$\Phi_c'(v)v+\phi_c'(v)=0$ and $\psi'(\alpha)=-\arctan((\theta-\alpha)/\sigma)$,
so $\psi'(\alpha)\le0$ for $\alpha\le\theta$ and $\psi'(\alpha)\ge0$ for $\alpha\ge\theta$.
Thus, $\psi(\alpha)\ge\lim_{\alpha\to\theta}\psi(\alpha)=0$.
To make the condition $\Phi_c({\theta}/{\sigma})=\arctan({\theta}/{\sigma})>0$
to hold, we must have $\theta>0$.
Further, we have $\Phi_c'(v)={1}/({1+v^2})\ge0$. It is also easy to derive that
$\lim_{\sigma\to0}\psi(\alpha)=(\pi/2)|\theta-\alpha|$, making us to call $\psi(\alpha)$
the smooth absolute loss function.
A direct calculation gives
\ben
\psi''(\alpha)=\frac{\Phi_c'(v)}{\sigma}=\frac{1}{\sigma(1+v^2)}.
\enn
Finally, for $\beta\in(-\frac{\pi}{2},\frac{\pi}{2})$ we have $\Phi_c^{-1}(-\beta)=\tan(-\beta)$ and
\ben
\psi^{\star}(\beta)=\beta\theta-\phi_c(\Phi_c^{-1}(-\beta))=\beta\theta+\frac12\ln(1+\tan^2(-\beta)).
\enn

{\bf Example 7: Smooth ReLU.} ReLU is a famous non-smooth activation function in
deep neural networks (DNN), which is defined as $\psi_{ReLU}(\alpha)=\max(0,\alpha)$.
Define the smooth ReLU (sReLU) function as
\ben
\psi_{sReLU}(\alpha;\sigma)=\Phi({\alpha}/{\sigma})\alpha+\phi({\alpha}/{\sigma})\sigma
\enn
with $\Phi$ and $\phi$ given in Section \ref{sec2}. Then $\psi_{sReLU}(\alpha;\sigma)=\psi(-\alpha)$,
where $\psi$ is defined as in Example 2 with $\theta=0$. By Example 2 we know that
$\psi_{sReLU}(\alpha;\sigma)$ uniformly converge to ReLU as $\sigma$ goes to $0.$

By Theorem \ref{GCL} we have the following remarks.

1) Any smooth convex function $\psi(\alpha)$ can be rewritten in the form
$\psi(\alpha)=\Phi_c(v)(\theta-\alpha)+\phi_c(v)\sigma$, where $\Phi_c(v)=-\psi'(\theta-\sigma v)$
and $\phi_c(v)=\psi(\theta-\sigma v)-\Phi_c(v)v$.

2) Given a monotonically increasing, differentiable function $\Phi_c(v)$, we are able to construct
a convex, smooth surrogate loss which is classification-calibrated for binary classification.
Moreover, there is no need to know the explicit expression of the loss function when learning
with gradient-based methods.

3) There is a great interest to develop a smoothing technique to approximate a non-smooth
convex function \cite{Nesterov2005,Shi2014}.
For example, \cite{Nesterov2005} improved the traditional bounds on the number of iterations
of the gradient methods based on a special smoothing technique in non-smooth convex minimization
problems. Theorem \ref{GCL} provides a new smoothing technique by searching a monotonically
increasing and differentiable function $\Phi_c(v)$ which approximates the sub-gradient of
the non-smooth convex function.

\begin{figure}
\centering
\includegraphics[width=0.49\textwidth]{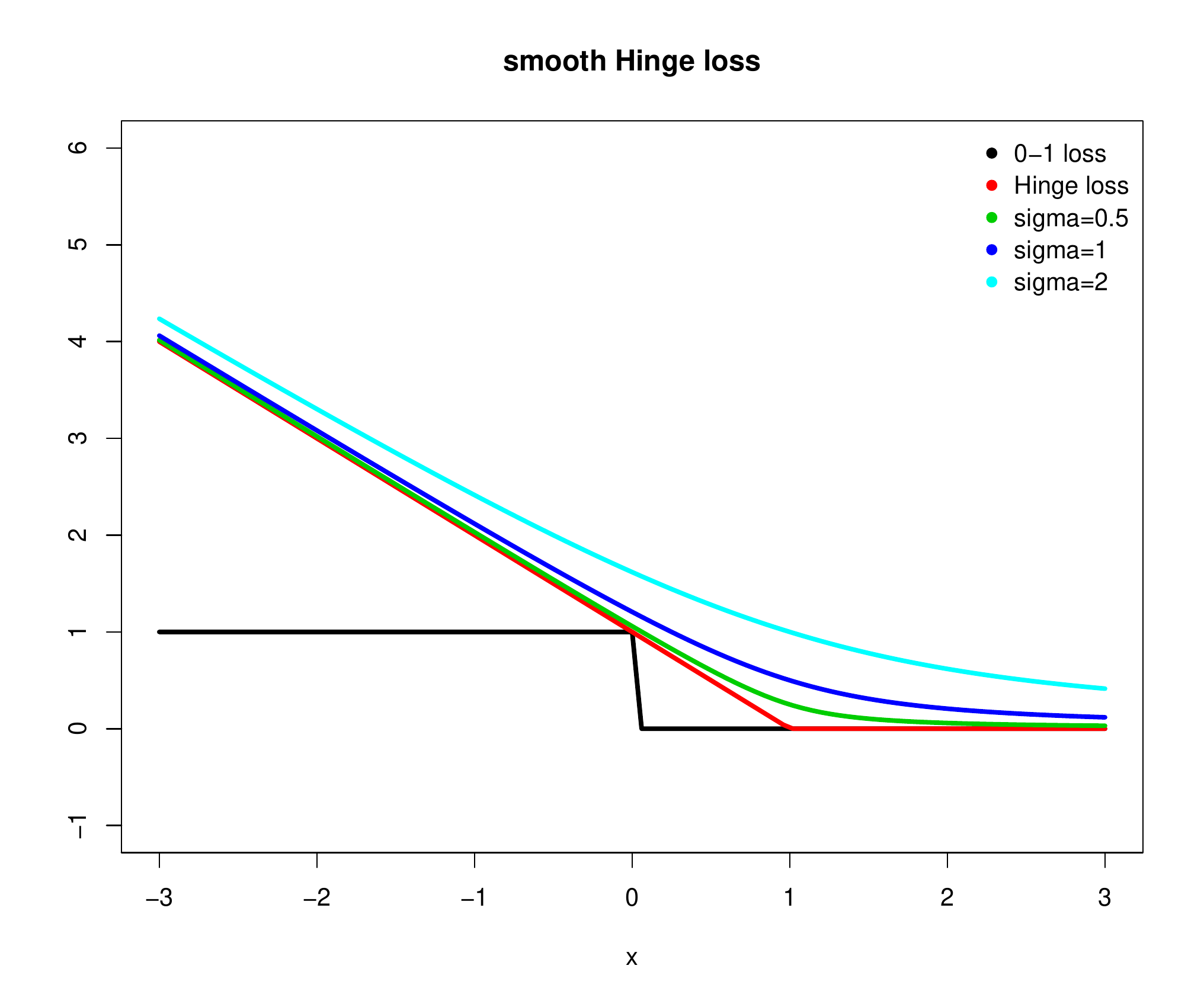}
\includegraphics[width=0.49\textwidth]{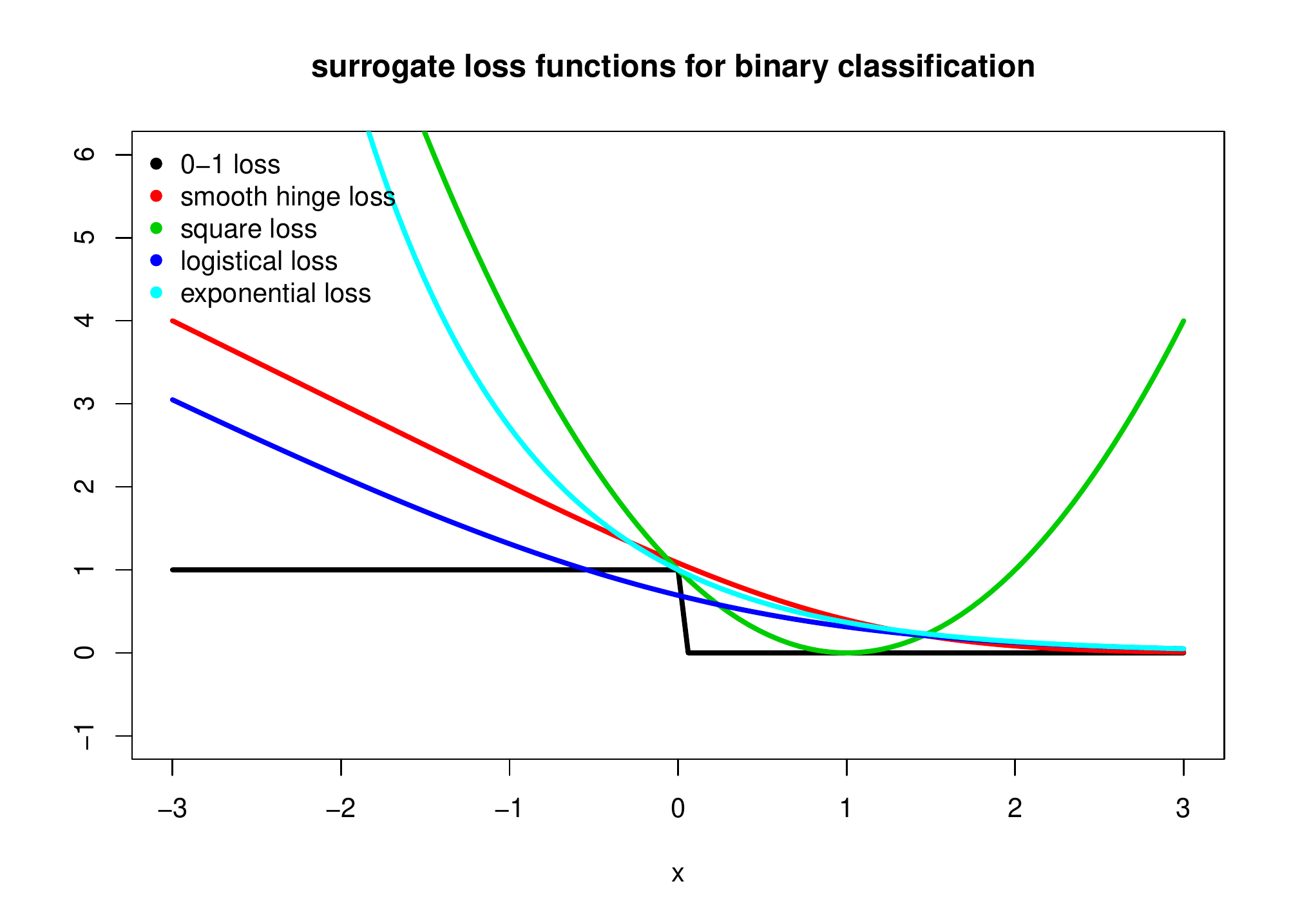}
\caption{\small Several surrogate convex loss functions for the $0/1$ loss used in binary classification.
The left figure presents the smooth Hinge loss $\psi_M$ with different parameter $\sigma$, where,
as $\sigma$ goes to $0$, the smooth Hinge loss gets close to the Hinge loss.
The right figure presents several popular surrogate loss functions.}
\label{figbinary}
\end{figure}

\section{Algorithms}\label{sec4}

There are many algorithms for SVMs. In the early time, the decomposition methods such as SMO
and $\text{SVM}^{\text{light}}$, which overcome the memory requirement of the quadratic optimization methods,
have been proposed to solve L1-SVM in its dual form \cite{Platt1998,Joachims1999}
(see \cite{Zhang2018} for convergence analysis of the decomposition methods including SMO
and $\text{SVM}^{\text{light}}$).
Later, several convex optimization methods have been introduced, such as the gradient-based methods \cite{Zhang2004},
the bundle method \cite{Teo2010,Franc2008}, the coordinate descent method \cite{Zhang2001,Chang2008},
the dual method \cite{Hsieh2008,Shalev-Shwartz2013,Zheng2013} and online learning
methods \cite{Shalev2011,WangDi2013,Ding2018}.
Based on the generalized Hessian matrix of a convex function with locally Lipschitz gradient
proposed in \cite{Mangasarian2002}, several second-order methods have been applied to
solve L2-SVM \cite{Keerthi2005,Lin2008}.

In this paper, we focus on the following smooth support vector machine
\be\label{SSVM}
\arg\min_{w\in\R^p} L(w):=\half\lambda\|w\|^2+\frac1n\sum_{i=1}^n\psi(y_iw^Tx_i;\sigma),
\en
where $\psi$ is the smooth Hinge loss $\psi_G$ or $\psi_M$.
We will analyze the first and second-order convex algorithms for the above SSVM.

\subsection{First-Order Algorithms}

The gradient descent (GD) method has taken the stage as the primary workhorse for convex optimization problems.
It iteratively approaches the optimal solution.
In each iteration, the standard full gradient descent (FGD) method takes the negative gradient
as the descent direction, and the classifier updates as follows
\be\label{GD}
w_{t+1}=w_t-\eta_t\left[\lambda w_t-\frac1n\sum_{i=1}^n\Phi_c(v_i^t)y_ix_i\right],
\en
where $\eta_t$ is a predefined step size and $v_i^t=({1-y_iw_t^Tx_i})/{\sigma}$.
The FGD method has a $O(1/t)$ convergence rate under standard assumptions \cite{Nesterov2004}.
Nesterov proposed a famous accelerated gradient (AGD) method in 1982 (see \cite{Nesterov2004}).
The AGD method achieves the optimal convergence rate of $O(1/t^2)$ for smooth objective functions.

However, for large-scale problems it is computationally very expensive to compute the full gradient
in each iteration. To accelerate the learning procedure, the linear classifier can be updated with a
stochastic gradient (SG) step instead of a full gradient (FG) step, such as Pegasos \cite{Shalev2011}.
Precisely, at the $t$-th iteration, $w_t$ is updated based on a randomly chosen example $(x_{i_t},y_{i_t})$:
\be\label{SGD}
w_{t+1} = w_t -\eta_t\left[\lambda w_t-\Phi_c(v_{i_t}^t)y_{i_t}x_{i_t}\right],
\en
where $\eta_t$ is a predefined step size which is required to satisfy that $\sum_{t=1}^{\infty}\eta_t=\infty$
and $\sum_{t=1}^{\infty}\eta_t^2<\infty$ for convergence.
$\eta_t\Phi_c(v_{i_t}^t)$ can be seen as the learning rate of the chosen example $(x_{i_t},y_{i_t})$.
Noting that $\eta_t$ is independent of $(x_{i_t},y_{i_t})$ and $\Phi_c(v_{i_t}^t)$ depends on the margin of
the chosen example $(x_{i_t},y_{i_t})$, we call $\eta_t$ the exogenous learning rate and $\Phi_c(v_{i_t}^t)$
the endogenous learning rate.

An advantage of the above stochastic gradient descent (SGD) algorithm is that its iteration cost
is independent of the number of training examples. This property makes it suitable for large-scale problems.
In the SGD algorithm, the full gradient $-(1/n)\sum_{i=1}^n\Phi_c(v_i^t)y_ix_i$ is replaced by
a stochastic gradient $-\Phi_c(v_{i_t}^t)y_{i_t}x_{i_t}$ of a randomly chosen example in the $t$-th iteration.
Though both gradients are equivalent in expectation, they are rarely the same. Thus, it is not a natural
to use the norm of the stochastic gradient, $\|\Phi_c(v_{i_t}^t)y_{i_t}x_{i_t}\|$, as a stopping criterion.
The discrepancy between the FG and the SG also has a negative effect on the convergence rate of the SGD method.
Since there is no guarantee that $\|\Phi_c(v_{i_t}^t)y_{i_t}x_{i_t}\|$ will approach to zero,
we need to employ a monotonically decreasing step size series $\{\eta_t\}_{t=0}^\infty$ with $\eta_t\to0$
for convergence. The small step size leads to a sub-linear convergence of the SGD method even
for the strongly convex objective function.

Many first-order algorithms have been proposed by combining the low computation cost of SGD and the faster
convergence of the FGD method to provide variance reduction \cite{Defazio2014a,wangc2013}.
These novel algorithms are known as stochastic variance reduction gradient (SVRG) algorithms,
such as SAG \cite{Schmidt2017}, SAGA \cite{Defazio2014a}, SDCA \cite{Shalev-Shwartz2013},
Finito \cite{Defazio2014b}, and SVRG \cite{wangc2013}. Their convergence analysis can be found in
the corresponding references.

\begin{algorithm}
\renewcommand{\algorithmicrequire}{\textbf{Input:}}
\renewcommand{\algorithmicensure}{\textbf{Output:}}
\caption{The Framework of TRON Algorithm}\label{TRON}
\begin{algorithmic}[1]
\REQUIRE Given $w_0=0$, training data $X$, label vector $Y$ and parameter $\lambda,\sigma$,
$\Delta_0=\|\nabla L(w_0)\|$\\
    \FOR {$t= 1,2,...$}
    \STATE Calculate $g_t=\nabla L(w_t)$ and \mbox{stop if}\;$\|\nabla L(w_t)\|\le 0.0005$\\
    \STATE Calculate the diagonal matrix $D$ with respect to the Hessian matrix $\nabla^2L(w_t)$\\
    \STATE Find an approximate solution $s_t$ of the trust region sub-problem with
           the conjugate gradient method in Algorithm \ref{CGalgorithm}\\
    \STATE $\rho_t=\frac{L(w_t+s_t)-L(w_t)}{q_t(s_t)}$, where
    \begin{align*}
    &q_t(s_t)=\nabla^T L(w_t)s_t+\frac12 s_t^T(\lambda E+\frac1n X^TDX)s_t \\
    &=\nabla^T L(w_t)s_t+\frac12\lambda\|s_t\|^2+\frac{1}{2n}(Xs_t)^TD(Xs_t)
    \end{align*}
    \STATE Update the classifier $w_{t+1}$:
    \ben
     w_{t+1}=\begin{cases}
     w_t+s_t &\qquad\mbox{if}\;\rho_t>\eta_0 \\
     w_t &\qquad\mbox{if}\;\rho_t\le\eta_0
    \end{cases}
    \enn
    \STATE Update $\Delta_{t+1}$ according to the rule:
    \ben
    &&\Delta_{t+1}\in[\delta_1\min(\|s_t\|,\Delta_t),\delta_2\Delta_t],\;\mbox{if}\;\rho\le\eta_1; \\
    &&\Delta_{t+1}\in[\delta_1\Delta_t,\delta_2\Delta_t],\;\mbox{if}\;\eta_1<\rho<\eta_2; \\
    &&\Delta_{t+1}\in[\Delta_t,\delta_3\Delta_t],\;\mbox{if}\;\rho\ge\eta_2.
    \enn
    \ENDFOR
  \end{algorithmic}
\end{algorithm}

\begin{algorithm}
  \renewcommand{\algorithmicrequire}{\textbf{Input:}}
  \renewcommand{\algorithmicensure}{\textbf{Output:}}
  \caption{Conjugate Gradient Algorithm}\label{CGalgorithm}
  \begin{algorithmic}[1]
    \REQUIRE $s_t^1\gets 0$, $r_t^1\gets(\nabla^2 L(w_t))s_t^1-\nabla L(w_t)=-\nabla L(w_t)$,
    $p_t^1=-\nabla L(w_t)$, $\Delta_t$
    \FOR {$k= 1,2,3,...$}
    \STATE If $\|r_t^k\|\le\xi_t\|g_t\|$, output $s_t=s_t^{k}$ and {\bf Terminate}.
    \STATE $\alpha_t^k=\frac{(r_t^k)^T r_t^k}{(p_t^k)^T\nabla^2 L(w_t)p_t^k}$\\
          $\;\;\;\;=\frac{(r_t^k)^T r_t^k}{\lambda(p_t^k)^Tp_t^k+\frac1n(Xp_t^k)^T(D(Xp_t^k))}$
    \STATE If $\|s_t^k+\alpha_t^k p_t^k\|\le\Delta_t$, set $s_t^{k+1}=s_t^k+\alpha_t^k p_t^k$
           and continue with Step 5.
           Otherwise, compute $\tau>0$ so that $\|s_t^k+\tau p_t^k\|=\Delta_t$.
           Set $s_t^{k+1}=s_t^k+\tau p_t^k$ and {\bf Terminate}. Compute
           $\tau=\frac{-(s_t^k)^Tp_t^k+\sqrt{((s_t^k)^Tp_t^k)^2+\|p_t^k\|^2(\Delta_t^2-\|s_t^k\|^2)}}{\|p_t^k\|^2}$
    \STATE $r_t^{k+1}=r_t^k-\alpha_t^k(\nabla^2 L(w_t) p_t^k)=r_t^k-\lambda\alpha_t^k p_t^k
           -\frac1n\alpha_t^k(X^T(D(X p_t^k)))$.
    \STATE $\beta=\frac{(r_t^{k+1})^Tr_t^{k+1}}{(r_t^k)^Tr_t^k}$
    \STATE $p_t^{k+1}=r_t^{k+1}+\beta p_t^k$
    \ENDFOR
  \end{algorithmic}
\end{algorithm}

\subsection{Second-Order Algorithms}

Recently, inexact Newton methods without computing the inverse Hessian matrix have been proposed to obtain a
superlinear convergence rate in machine learning, such as
LiSSA \cite{Agarwal2017} and TRON \cite{Lin1999,Lin2008}.
LiSSA constructs a natural estimator of the inverse Hessian matrix by using the Taylor expansion, while
TRON is a trust region Newton method introduced in \cite{Lin1999} to deal with general
bound-constrained optimization problems, which generates an approximate Newton direction by solving
a trust-region subproblem.
In TRON, the direction step should give as much reduction as the Cauchy step.
\cite{Lin2008} applied the TRON method to maximize the log-likelihood of the logistic regression model,
in which a conjugate gradient method was used to solve the trust-region subproblem approximately.
TRON was also extended to solve the L2-SVM model by introducing a general Hessian for convex objective
functions having a Lipschitz continuous gradient \cite{Mangasarian2002}.

We want to apply TRON to solve our SSVM problem \eqref{SSVM}. By Theorem \ref{UniSmoothHL} we have
\ben
\nabla^2\psi(y_iw^Tx_i)=x_i\left(\frac{\Phi_c'(v_i)}{\sigma}\right)x_i^T=d_i x_i x_i^T,
\enn
where $v_i=({1-y_iw^Tx_i})/{\sigma}$ and $d_i=\Phi_c'(v_i)/\sigma$.
Then the Hessian matrix of the total loss function $L(w)$ is given by
\ben
\nabla^2 L(w)=\lambda E+\frac1n\sum_{i=1}^n d_i x_i x_i^T=\lambda E+\frac1n X^TDX,
\enn
where $E$ is the $p\times p$ identity matrix, $D=diag(d_{1},\cdots,d_{n})$ is a diagonal matrix,
and $X$ is the input feature matrix with its each row representing an instance.

The Newton step is given as $\nabla^{-2}L(w)\nabla L(w)$ which requires a huge computation cost
for high dimensional machine learning problems. The trust-region method is to provide an approximate
Newton direction. For recent advances in the trust-region methods see \cite{Yuan2015}.
Suppose $w_t$ is the solution at the $t$-th iteration. Then the trust-region method generates a
direction step $s_t$ by solving the quadratic subproblem
\be\label{subproblem}
s_t=\arg\min_{s\in\R^p}\left<\nabla L(w_t),s\right>+\frac12s^TBs,\;\mbox{s.t.}\;\|s\|\le\Delta_t,
\en
where $\Delta_t$ is the trust region and $B$ is a positive semi-definite matrix.
Here, we choose the matrix $B$ to be the true Hessian $\nabla^2 L(w_t)$.
By solving \eqref{subproblem} with the conjugate-gradient (CG) method, the objective function is
an adequate approximation of the reduction of the total loss $L(w_t+s_t)-L(w_t)$.
Thus, if the trust region $\Delta_t$ is large enough so that $\|\nabla^{-2}L(w_t)\nabla L(w_t)\|\le\Delta_t$,
then we will get an inexact Newton step.
The framework of TRON for the SSVM \eqref{SSVM} is presented in Algorithm \ref{TRON}, and the CG method
for the subproblem \eqref{subproblem} is given in Algorithm \ref{CGalgorithm}.
In updating $\Delta_{t+1}$ from $\Delta_t$, we use the same parameters
$\eta_0,\eta_1,\eta_2,\delta_1,\delta_2,\delta_3$ as in \cite{Lin2008}.

Note that there is no matrix inversion or matrix-matrix multiplication but a matrix-vector multiplication
in Algorithm \ref{TRON}. The structure of the Hessian matrix $\nabla^2 L(w)$ makes it appropriate
for $\nabla^2 L(w)$ to be applied to the second-order algorithm, especially when the input feature
matrix $X$ is sparse. For any vector $s\in\R^p$, the Hessian matrix-vector multiplication is given as
\ben
\nabla^2 L(w)s=(\lambda E+\frac1n X^TDX)s=\lambda s+\frac1n X^T(D(Xs)),
\enn
so there is no need to compute and store the dense and high-dimensional Hessian matrix $\nabla^2 L(w)$.
We can sequentially calculate $s_1=Xs,\;s_2=Ds_1,\;s_3=X^Ts_2$, which involves only sparse matrix-vector
multiplication. Thus, both the computation and storage costs can be controlled even for very
high-dimensional problems. The following theorem establishes the convergence rate of the TRON algorithm
(Algorithm \ref{TRON}), which is similar to that of the TRON algorithm developed for
logistic regression in \cite{Lin2008}.

\begin{theorem}\label{thm4}
The sequence $w_t$ generated by Algorithm \ref{TRON} globally converges to the unique optimal solution $w^\star$
of the SSVM (\ref{SSVM}).
If $\xi_t<1$, then Algorithm \ref{TRON} has a Q-linear convergence, that is,
$\lim_{t\to\infty}\frac{\|w_{t+1}-w^\star\|}{\|w_t-w^\star\|}<1$.
If $\xi_t\to0$ as $t\to\infty$, then Algorithm \ref{TRON} is of Q-superlinear convergence, that is,
$\lim_{t\to\infty}\frac{\|w_{t+1}-w^\star\|}{\|w_t-w^\star\|}=0$.
Moreover, if $\xi_t=\kappa\|\nabla L(w_t)\|$ for a positive constant $\kappa$, then $w_t$ converges
quadratically to $w^\star$, that is, $\lim_{t\to\infty}\frac{\|w_{t+1}-w^\star\|}{\|w_t-w^\star\|^2}<1$.
\end{theorem}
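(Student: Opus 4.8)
The plan is to treat Theorem \ref{thm4} as an instance of the classical convergence theory for trust-region Newton methods (in the spirit of Lin--Mor\'e and of \cite{Lin2008} for logistic regression), after first recording the structural properties of the objective $L(w)$ that the SSVM inherits from the smooth Hinge loss. First I would establish that $L$ is infinitely differentiable, $\lambda$-strongly convex, and has a Lipschitz continuous gradient. Strong convexity is immediate from the Hessian formula derived above: since $\nabla^2 L(w)=\lambda E+\frac1n X^TDX$ with $d_i=\Phi_c'(v_i)/\sigma\ge 0$ (by Theorem \ref{GCL}), we have $\nabla^2 L(w)\succeq\lambda E$ for all $w$, so $L$ admits a unique minimizer $w^\star$ characterized by $\nabla L(w^\star)=0$. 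Because each $\psi$ is $\mu$-smooth (Theorem \ref{GCL}, part 4), the $d_i$ are uniformly bounded, hence $\nabla^2 L(w)\preceq(\lambda+M)E$ for some constant $M$; this gives the Lipschitz gradient and, together with strong convexity, the two-sided equivalence $\lambda\|w_t-w^\star\|\le\|\nabla L(w_t)\|\le(\lambda+M)\|w_t-w^\star\|$ that lets me pass freely between error measured in gradient norm and in iterate distance.

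With these properties in hand, the global convergence follows from the standard trust-region framework. The key point is that the CG procedure of Algorithm \ref{CGalgorithm}, being a truncated (Steihaug-type) conjugate gradient run started from $s=0$, produces a step $s_t$ whose model reduction $-q_t(s_t)$ is at least a fixed fraction of the Cauchy-point reduction. Combined with the acceptance test on $\rho_t$ and the radius update rule of Algorithm \ref{TRON}, the Lin--Mor\'e convergence theorem yields $\lim_{t\to\infty}\|\nabla L(w_t)\|=0$; strong convexity then upgrades this to $w_t\to w^\star$, giving the asserted global convergence to the unique optimum.

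The local convergence rates are where the real work lies, and here I would argue that the trust-region constraint eventually becomes inactive so that the accepted step is a genuine inexact Newton step. Concretely, near $w^\star$ the positive definiteness of the Hessian makes the quadratic model $q_t$ an increasingly accurate predictor of the actual reduction, so $\rho_t\to 1$; by the radius update rule $\Delta_t$ is then no longer shrunk and stays bounded away from $0$, while the inexact Newton step has $\|s_t\|\to 0$. Hence for $t$ large the bound $\|s_t\|\le\Delta_t$ is slack, the trust-region truncation branch of Algorithm \ref{CGalgorithm} is never triggered, CG terminates through its residual test, and the accepted update satisfies $\nabla^2 L(w_t)\,s_t=-\nabla L(w_t)+r_t$ with $\|r_t\|\le\xi_t\|\nabla L(w_t)\|=\xi_t\|g_t\|$. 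At this stage the three rate statements reduce to the inexact Newton rate theory (Dembo--Eisenstat--Steihaug): using the Hessian equivalence above and a Taylor expansion of $\nabla L$ about $w^\star$, one shows $\|w_{t+1}-w^\star\|\le(\xi_t+o(1))\,\|w_t-w^\star\|$, which yields Q-linear convergence when $\xi_t<1$ and Q-superlinear convergence when $\xi_t\to 0$. For the quadratic rate I would use that $\nabla^2 L$ is locally Lipschitz (immediate, since $\psi$ is infinitely differentiable and the trajectory is bounded): the Taylor remainder contributes an $O(\|w_t-w^\star\|^2)$ term, and with $\xi_t=\kappa\|\nabla L(w_t)\|=O(\|w_t-w^\star\|)$ the forcing term contributes another $O(\|w_t-w^\star\|^2)$, so $\|w_{t+1}-w^\star\|=O(\|w_t-w^\star\|^2)$.

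The main obstacle, as indicated, is the transition step: rigorously showing that $\Delta_t$ is bounded below and that the inexact Newton step lands inside the trust region for all large $t$, so that the method stops truncating at the boundary and the pure inexact Newton recursion takes over. Once that is secured, the rate estimates are routine consequences of strong convexity, the locally Lipschitz Hessian, and the forcing-term bound $\|r_t\|\le\xi_t\|g_t\|$ built into the CG stopping rule.
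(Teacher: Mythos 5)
Your proposal is correct and follows essentially the same route as the paper, which in fact gives no explicit proof of Theorem~\ref{thm4} at all: it simply appeals to the analysis of TRON for logistic regression in \cite{Lin2008} (itself resting on the Lin--Mor\'e trust-region theory and the Dembo--Eisenstat--Steihaug inexact-Newton rates), noting that $L$ is smooth with $\lambda E\preceq\nabla^2 L(w)\preceq(\lambda+M)E$. Your reconstruction---strong convexity and bounded Hessian, global trust-region convergence via the Cauchy-point reduction, the transition to an untruncated inexact Newton step with residual bound $\|r_t\|\le\xi_t\|g_t\|$, and then the three rate statements---is exactly the argument that citation encapsulates, filled in with the details the paper omits.
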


According to the convex optimization theory, it is not surprising that the trust region Newton method 
achieves a quadratic convergence rate for a smooth convex objective function $L(w)$ 
if $\|\nabla^{2}L(w)\|\le\mu$ for some $\mu$.
This is also true for other second-order algorithms such as BFGS and LBFGS.
Therefore, we can also apply other second-order algorithms such as BFGS and LBFGS to solve
the SSVM (\ref{SSVM}) with similar convergence results as Theorem \ref{thm4}.
Further, the second-order stochastic optimization algorithm LiSSA proposed in \cite{Agarwal2017}
for logistical regression can also be extended to solve the SSVM (\ref{SSVM}).
Note that the TRON algorithm may be applied to L2-SVM with the help of the generalized Hessian matrix.
However, the squared Hinge loss in L2-SVM is not twice differentiable, so there is no guarantee to
achieve the quadratic convergence rate of TRON for L2-SVM.

\section{Experiment}\label{sec5}

In this section, we conduct two experiments to study the first-order and second-order algorithms for
our SSVM (\ref{SSVM}) with the smooth Hinge loss $\psi_{G}(\alpha;\sigma)$ or $\psi_{M}(\alpha;\sigma)$.

\subsection{Datasets}

In this paper, we focus on the linear SVM model.
We consider three data sets: NEWS20, RCV1 and REAL-SIM, which come from document classification and can be
download from LIBSVM website
\footnote{https://www.csie.ntu.edu.tw/\url{~}cjlin/libsvmtools/datasets/}.
Table \ref{dataset} lists the number of instances and features as well as the sparsity metric of these
datasets which is the proportion of non-zero elements in the input feature.
Details can be found in \cite{Lin2008}.
\begin{table}[htbp]
\caption{The three sparse datasets used in the experiments.}\label{dataset}
\vskip 0.15in
\begin{center}
\begin{small}
\begin{sc}
\begin{tabular}{cccc}
\toprule
dataset &  data size & feature & sparsity \\
\midrule
NEWS20 & 19996 &  1355191 & 0.034\% \\
RCV1 & 697641 & 47236 &  0.155\% \\
REAL-SIM & 72309 & 20958 &  0.245\% \\
\bottomrule
\end{tabular}
\end{sc}
\end{small}
\end{center}
\vskip -0.1in
\end{table}

\subsection{The Smooth Parameter Sensitivity Study}

We first study the sensitivity of the smooth parameter $\sigma$.
As discussed in Section \ref{sec2}, the smooth Hinge loss functions $\psi_{G}(\alpha;\sigma)$
and $\psi_{M}(\alpha;\sigma)$ approach to the Hinge loss if $\sigma$ tends to $0.$
If $\sigma$ is too small, then SSVM is almost equivalent L1-SVM.
If $\sigma$ is too large, then the margin $yw^Tx$ has little effect on the smooth loss, which makes it
impossible to learn a good classifier.
We set $\lambda= 10^{-5}$, $\xi_t=0.1$ and choose $\sigma$ to be $2^{-30}$, $2^{-25}$, $2^{-20}$, $2^{-15}$,
$2^{-10}$, $2^{-9}$, $\cdots$, $2^{5}$.
For each dataset, we randomly divide it into $5$ parts, choose $4$ parts for training, and the remaining one
for testing.
To reduce the variance in the results, for each data set, we generate $4$ independent $5$-part
partitions, leading to a total $20$ runs for each dataset.
Then the training accuracy is obtained by averaging over these $20$ runs.
We present the training accuracy of the three datasets NEWS20, REAL-SIM, and RCV in Figure \ref{figaccuracy}.

\begin{figure}
\centering
\includegraphics[width=0.49\textwidth]{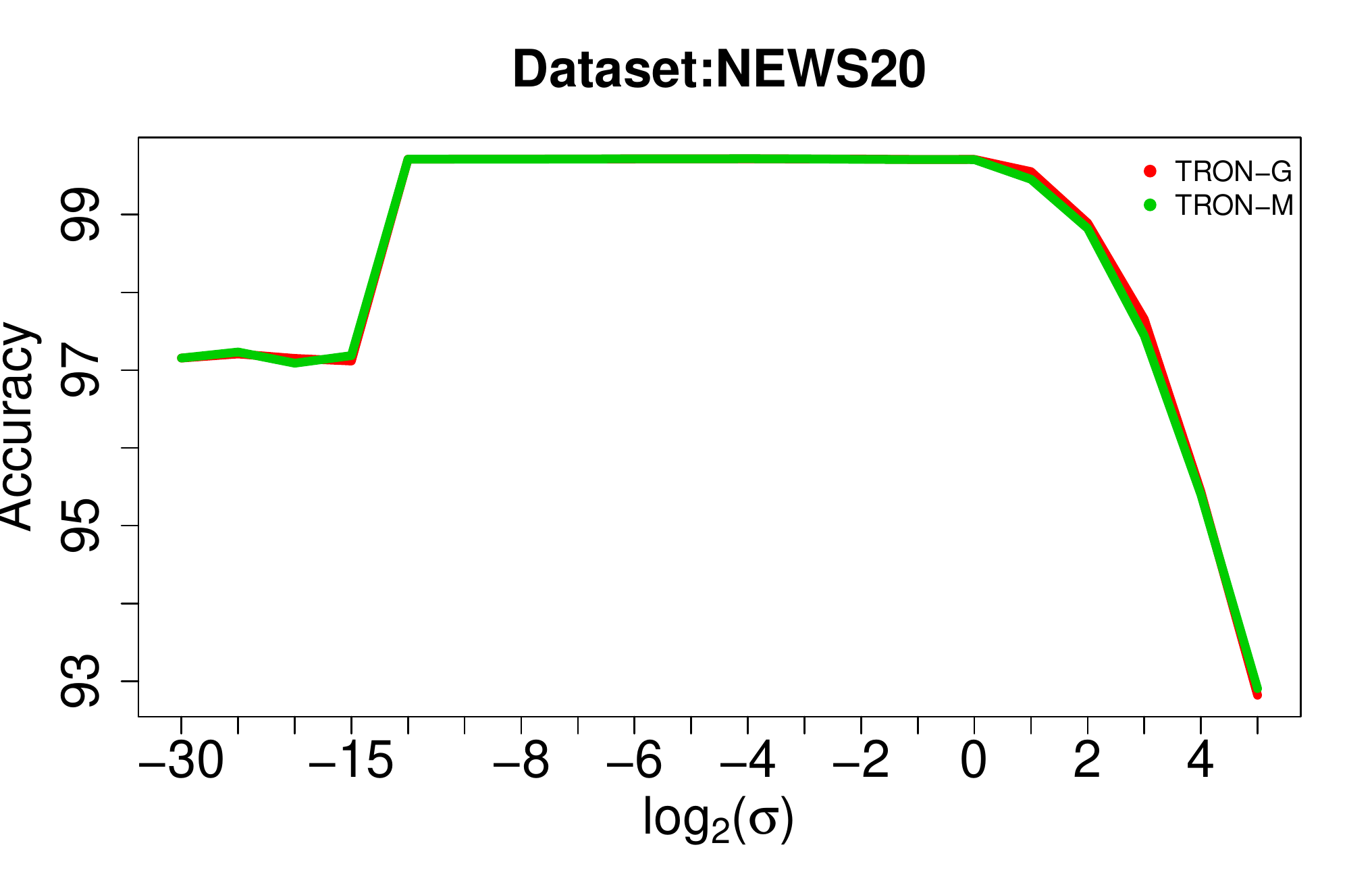}
\includegraphics[width=0.49\textwidth]{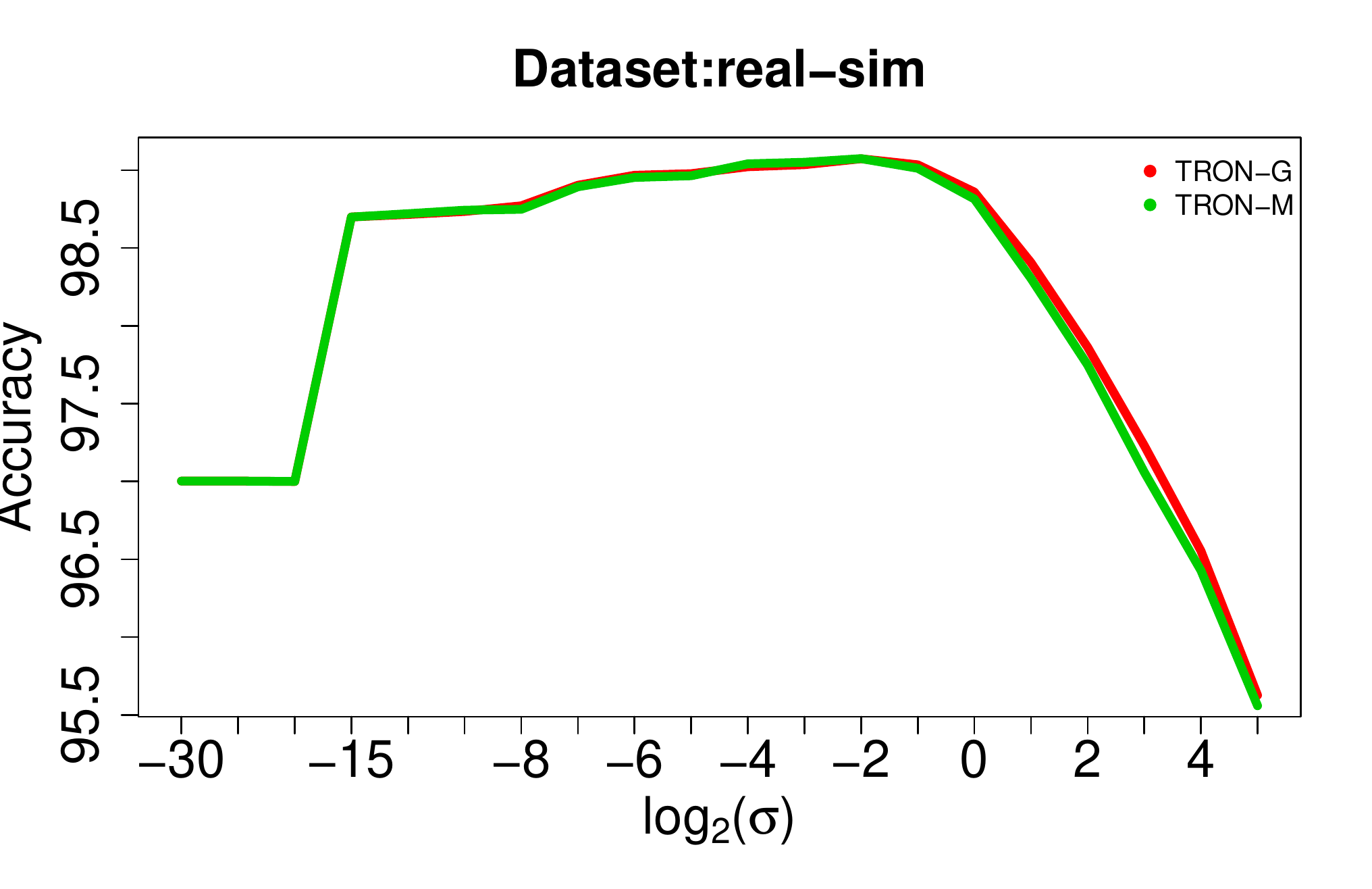}
\includegraphics[width=0.5\textwidth]{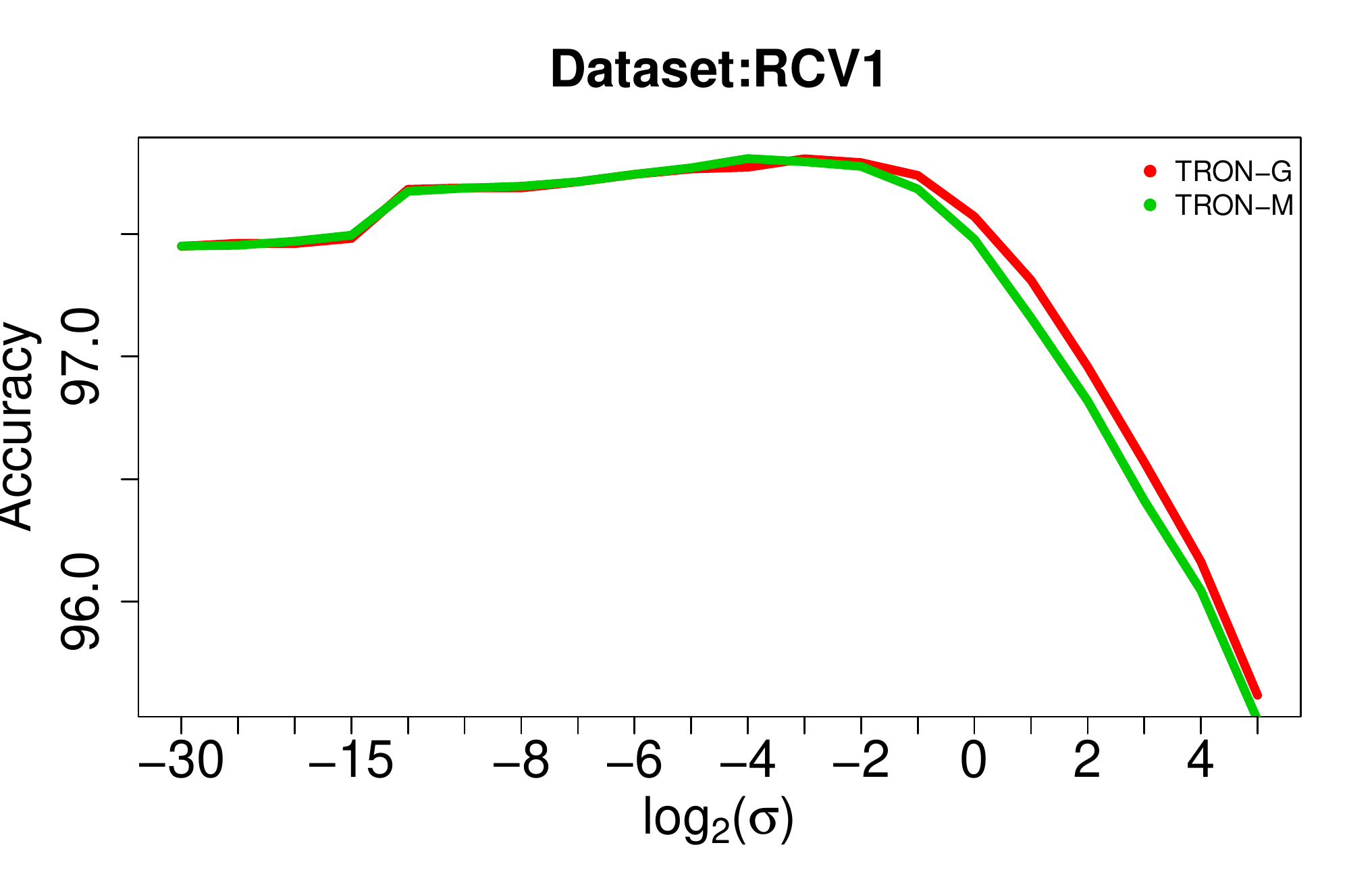}
\caption{The training accuracy of the two new models on the datasets with respect to the parameter $\sigma$:
NEWS20 (top left), REAL-SIM (top right) and RCV1 (bottom).
The vertical axis shows the average accuracy and the horizontal axis is $\log_2\sigma$.
The red and green lines represent $\psi_G$ and $\psi_M$, respectively.
}\label{figaccuracy}
\end{figure}

\begin{table*}[htbp]
\caption{The test accuracy (\%) of different models on the three datasets.}\label{Experiment}
\vskip 0.15in
\begin{center}
\begin{small}
\begin{sc}
\begin{tabular}{ccccc}
\hline
METHODS   & LOSSES                          &    NEWS20           &    REALSIM          &    RCV1 \\ \hline
TRON      & Logistic loss                   &  $ 95.99 \pm 0.21 $ &  $ 97.20 \pm 0.12 $ &  $ 97.23 \pm 0.04 $  \\
TRON      & squared Hinge loss $\ell_2$      &  $ 96.57 \pm 0.23 $ &  $ 97.40 \pm 0.12 $ &  $ 97.65 \pm 0.04 $  \\
TRON      & smooth Hinge loss $\psi_M$      &  $ 96.72 \pm 0.22 $ &  $ 97.36 \pm 0.13 $ &  $ 97.60 \pm 0.04 $  \\
TRON      & smooth Hinge loss $\psi_G$      &  $ 96.73 \pm 0.21 $ &  $ 97.41 \pm 0.12 $ &  $ 97.61 \pm 0.04 $  \\
PEGASOS   & Hinge loss $\ell_1$              &  $ 94.12 \pm 0.41 $ &  $ 96.42 \pm 0.16 $ &  $ 97.50 \pm 0.05 $  \\
SIFS      & smooth Hinge loss $\ell_\gamma$ &  $ 94.20 \pm 0.40 $ &  $ 96.77 \pm 0.14 $ &  $ 97.52 \pm 0.04 $  \\
\hline
\end{tabular}
\end{sc}
\end{small}
\end{center}
\vskip -0.1in
\end{table*}

\begin{table*}[htbp]
\caption{The running time(s) of different models on the three datasets.}\label{runningtime}
\vskip 0.15in
\begin{center}
\begin{small}
\begin{sc}
\begin{tabular}{ccccc}
\hline
METHODS   & LOSSES                          & NEWS20    & REALSIM  &    RCV1   \\ \hline
TRON      & Logistic loss                   &   1.82    &  0.46    &  7.43   \\
TRON      & squared Hinge loss $\ell_2$      &  10.23    &  1.31    &  31.97   \\
TRON      & smooth Hinge loss $\psi_M$      &  16.43    &  2.01    &  24.25   \\
TRON      & smooth Hinge loss $\psi_G$      &  19.95    &  2.45    &  19.63   \\
PEGASOS   & Hinge loss $\ell_1$              &  683.46   &  86.95   &  2486.50 \\
SIFS      & smooth Hinge loss $\ell_\gamma$ &  238.19   &  39.06   &  430.18 \\
\hline
\end{tabular}
\end{sc}
\end{small}
\end{center}
\vskip -0.1in
\end{table*}

\subsection{Comparison with Other Algorithms}

We now compare our SSVMs with several state-of-the-art binary classification models:
Pegasos for L1-SVM, TRON for L2-SVM, and TRON for logistic regression \cite{Lin2008,Shalev2011}.
These experiments were conducted with MATLAB on a workstation with 8GB memory and 3.60GHZ CPU.
We also compare our method with the new method SIFS proposed in 2019 \citep{Hong2019}
(the code of this method is obtained from https://github.com/jiewangustc/SIFS).
We use $\sigma=2^{-6},2^{-1},2^{-3}$ for the datasets NEWS20, REAL-SIM and RCV1, respectively.
The averaged testing accuracy is presented in Table \ref{Experiment}.
The results show that the second-order method is better than the first-order methods in the large-scale sparse
problems and our new model is effective for binary classification problems.
By using the same stopping criterion $\|\nabla L(w_t)\|\le 0.001$ for the same optimization algorithm TRON,
we compare several different surrogate loss functions: the logistic loss, the squared Hinge loss, and
the smooth Hinge losses. The logistic loss has minimum iteration steps, and the smooth Hinge losses
achieve the highest accuracy.

\section{Conclusions}\label{sec6}

In this paper, we proposed two smooth Hinge loss functions $\psi_G$ and $\psi_M$ to build
two smooth support vector machines for binary classification problems.
We have also discussed several modern first-order and second-order convex algorithms which can be applied 
to solve our SSVMs effectively. The second-order algorithms can achieve a quadratic convergence rate for 
the SSVMs, which is different from the traditional SVMs.
For our SSVMs and several state-of-the-art binary classification models, experiments have also been
carried out on three real-world data sets from document classification, and the experimental results
illustrated that our SSVMs are useful for binary classification problems.
Further, motivated by the smooth Hinge loss functions $\psi_G$ and $\psi_M$, we gave a general smooth convex
loss function which unifies several commonly-used convex loss functions in machine learning,
including the L1 regularization and the ReLU activation function in neural networks.
The unified framework provides a tool to approximate a non-smooth convex function with a smooth one
which can be used to build a smooth machine learning model to be solved with a faster convergent
optimization algorithm.

\bibliography{ssvm}
\end{document}